\newtheorem{theorem}{Theorem}
\newtheorem{definition}{Definition}
\newtheorem{lemma}{Lemma}
\title{Applying Differential Privacy to Tensor Completion}
\author[1]{Zheng Wei}
\author[1]{Zhengpin Li}
\author[2]{Xiaojun Mao}
\author[1]{Jian Wang \thanks{Zheng Wei and Zhengpin Li contributed equally to this work, and Xiaojun Mao and Jian Wang are the co-corresponding authors. (Emails: zwei19@fudan.edu.cn, lizp21@m.fudan. edu.cn, maoxj@sjtu.edu.cn, jian\_wang@fudan.edu.cn)}}
\affil[1]{School of Data Science, Fudan University, China }
\affil[2]{School of Mathematical Sciences, Shanghai Jiao Tong University, China}
\date{}
\begin{document}
\maketitle

\begin{abstract}

Tensor completion aims at filling the missing or unobserved entries based on partially observed tensors. However, utilization of the observed tensors often raises serious privacy concerns in many practical scenarios. To address this issue, we propose a solid and unified framework that contains several approaches for applying differential privacy to the two most widely used tensor decomposition methods: i) CANDECOMP/PARAFAC~(CP) and ii) Tucker decompositions. For each approach, we establish a rigorous privacy guarantee and meanwhile evaluate the privacy-accuracy trade-off.  Experiments on synthetic and real-world datasets demonstrate that our proposal achieves high accuracy for tensor completion while ensuring strong privacy protections.
\end{abstract}

\section{Introduction}\label{sec:intro}

In machine learning knowledge, missing data is a prevalent issue, which can be caused by data collection, data corrosion, or other artificial reasons. As one of the most popular completion methods, low-rank matrix completion has received much attention in a wide range of applications, such as collaborative filtering~\cite{goldberg1992using}, computer vision~\cite{tomasi1992shape}, and multi-class learning~\cite{amit2007uncovering, evgeniou2007multi}. However, there are many genuine cases where data has more than two dimensions and are best represented as multi-way arrays, such as tensor. For instance, electronic health records~(EHRs)~\cite{johnson2016mimic}, which reserve patients' clinical histories, consist of three parts: patients, diagnosis, and procedure. A more common scenario is that data contains the time dimension, such as traffic data of network~\cite{vardi1996network}, which can be viewed as a series traffic matrix presenting the volumes of traffic between original and destination pairs by unfolding as time intervals. Therefore, as a natural high-order extension of low-rank matrix completion, low-rank tensor completion is gaining more and more interest. 

For completion methods, privacy-preserving is a significant issue that cannot be ignored. This concept is firstly proposed in~\cite{agrawal2000privacy} and considered as a vital goal for mining the value of data while protecting its privacy. In recent years, this issue has attracted increasing attention in matrix and tensor completions as well as their applications. For example, users are required to offer their ratings to recommender service in recommendation scenarios, which often raises serious privacy concerns because of insidious attacks and unexpected inference on users' ratings or profiles~\cite{narayanan2008robust, aimeur2008lambic, mcsherry2009differentially}. The purpose of privacy-preserving tensor completion is to ensure high-level privacy of the observed data, while keeping completion performance as high as possible. 

To the best of our knowledge, few studies systematically studied privacy-preserving tensor completion. In this work, we propose a solid and unified framework for two most widely used tensor decomposition methods: CANDECOMP/PARAFAC~(CP) decomposition~\cite{hitchcock1927expression, carroll1970analysis, harshman1970foundations} and Tucker decomposition~\cite{tucker1966some, kroonenberg1980principal, de2000multilinear} to maintain privacy guarantees by utilizing differential privacy~\cite{dwork2006calibrating, dwork2014algorithmic}, the dominant standard for privacy protection. The framework contains several privacy-preserving computation ways: input perturbation, gradient perturbation, and output perturbation. They all result in the trade-off between the accuracy and privacy-preserving. 

The contributions of our work are summarized as follows. 

\begin{itemize}
    \item 
We are the first to propose a solid and unified framework for applying differential privacy to tensor completion.  

\item We provide complete algorithm procedures and theoretical analysis for each privacy-preserving approach in our framework. 

\item Experimental results on synthetic and real-world datasets demonstrate that the proposed approaches can yield high accuracy, while ensuring strong privacy protections. 
\end{itemize}

\section{Related Work}\label{sec:related}

Differential privacy has drawn much attention for privacy-preserving data analysis because of its provable privacy guarantees and few computation~\cite{dwork2006calibrating}. Various work took this definition as the standard in data mining~\cite{friedman2010data}, recommendation systems~\cite{shen2014privacy, friedman2016differential, 2015Differentially, liu2015fast}, and deep learning~\cite{abadi2016deep}. Under the constraint of differential privacy, there are numerous algorithms for achieving the trade-off between the level of privacy-preserving and accuracy. Stimulated by~\cite{dwork2014algorithmic, friedman2016differential}, we come up with a complete framework for tensor completion with differential privacy. In~\cite{friedman2016differential}, input perturbation was proposed to utilize the noise from the Laplacian mechanism to interference input data. Among the analysis for gradient perturbation approach, Williams {\it et al.}~\cite{williams2010probabilistic} firstly conducted investigation on gradient descent with noisy updates. Afterwards, ~\cite{bassily2014private, wang2018differentially, jayaraman2018distributed} developed a set of gradient perturbation algorithms and established privacy guarantees for them. Objective perturbation was firstly proposed in~\cite{chaudhuri2008privacy} and extended in~\cite{chaudhuri2011differentially, 2015Differentially}. This method aims at perturbing the objective function before training. Last but not least, output perturbation~\cite{dwork2006calibrating, chaudhuri2011differentially, jain2012differentially} works by adding noise to the solved optimal values. ~\cite{wu2017bolt, zhang2017efficient} provided novel algorithms and convergence analysis from the viewpoint of optimization.

Our work focuses on the combination of differential privacy and tensor completion. As a natural extension of matrix completion, tensor completion has also been used in many applications, such as data mining~\cite{morup2011applications}. CP decomposition, as a classical and notable algorithm, was first proposed by Hitchcock~\cite{hitchcock1927expression} and further discussed in~\cite{carroll1970analysis, harshman1970foundations}. Another representational decomposition algorithm, Tucker decomposition, was firstly presented by Tucker~\cite{tucker1966some} and further developed in~\cite{kroonenberg1980principal, de2000multilinear}. There are also several other tensor decomposition methods related to CP and Tucker listed in~\cite{kolda2009tensor}. In~\cite{liu2012tensor}, Liu {\it et al.} firstly built the theoretical foundation of low-rank tensor completion and propose three approaches based on the novel definition of trace norm for tensors: SiLRTC, FaLRTC, and HaLRTC. There have also been many follow-up progress of~\cite{liu2012tensor}, see, e.g.,~\cite{xu2013parallel, wright2009robust, tan2014tensor, zhou2017tensor, song2019tensor}. 

Some previous studies resolved privacy-preserving tensor completion problems under specific circumstances. Wang and Anandkumar~\cite{wang2016online} proposed an algorithm for differentially private tensor decomposition using a noise calibrated tensor power method. Imtiaz and Sarwate~\cite{imtiaz2018distributed} designed two algorithms for distributed differentially private principal component analysis and orthogonal tensor decomposition. Ma {\it et al.}~\cite{ma2019privacy} developed a novel collaborative tensor completion method that can preserve privacy on EHRs. Yang {\it et al.}~\cite{yang2021optimized} presented a privacy-preserving tensor completion method that uses the optimized federated soft-impute algorithm, which can provide privacy guarantees on cyber-physical-social systems. In this paper, we take CP decomposition and Tucker decomposition as the backbone completion algorithms to combine with differential privacy.
\section{Preliminaries and Notations}\label{sec:pre}

In this section, we introduce the notations and preliminaries about tensor completion and differential privacy throughout the paper, and state some known lemmas that will be utilized later.
 
We describe tensor and its operations mainly based on notations in~\cite{kolda2009tensor}. Tensors are denoted by Euler script letters ($\mathcal{X, Y, Z}$), matrices by boldface capital letters ($\mathbf{A, B, C}$), vectors by boldface lowercase letters ($\mathbf{a, b, c}$) and scalars by vanilla lowercase letters (a, b, c). Considering entry representation, we use $x_{ijk}$ to represent the value seated in $(i, j, k)$ of $\mathcal{X}$. We also use subscript ``$:$'' to indicate all values of a dimension. For instance, $\mathbf{a}_{m:}$ and $\mathbf{a}_{:r}$ are the $m$th-row and $r$th-column of matrix $\mathbf{A}$, respectively. 

\paragraph{Hadamard Product:} The Hadamard product is the elementwise product for two $n$th-order tensors with the same size. For example, the Hadamard product for two tensors $\mathcal{X}\in \mathbb{R}^{i_1\times\cdots\times i_n}$ and $\mathcal{Y}\in \mathbb{R}^{i_1\times\cdots\times i_n}$ is denoted by $\mathcal{X} * \mathcal{Y}$, which is defined by $(\mathcal{X} * \mathcal{Y})_{i_1\cdots i_n}=x_{i_1\cdots i_n} y_{i_1\cdots i_n}$.

\paragraph{Khatri-Rao Product:} The Khatri-Rao product of matrices $\mathbf{A} \in \mathbb{R}^{I\times L}$ and $\mathbf{B} \in \mathbb{R}^{K\times L}$ is denoted by $\mathbf{A} \odot \mathbf{B}$, which is defined by $\mathbf{A} \odot \mathbf{B}=\left[\mathbf{a}_{:1} \otimes \mathbf{b}_{:1} \cdots \mathbf{a}_{:L} \otimes \mathbf{b}_{:L}\right]_{I J \times L}$ where $\otimes$ denotes Kronecker product. The Kronecker product of two vectors $\mathbf{a} \in \mathbb{R}^{I}$ and $\mathbf{b} \in \mathbb{R}^{J}$ is obtained by $\mathbf{a}  \otimes \mathbf{b}  = [a_{1} \mathbf{b} \ a_{2} \mathbf{b}  \cdots a_{I} \mathbf{b} ]^T$.

\paragraph{Mode-$n$ Product:} The mode-$n$ product of a tensor $\mathcal{X} \in \mathbb{R}^{I_{1} \times I_{2} \times \cdots \times I_{N}}$ and a matrix $\mathbf{U} \in \mathbb{R}^{J \times I_{n}}$ is denoted by $\mathcal{X} \times_{n} \mathrm{U}$, which is of size $I_{1} \times \cdots \times I_{n-1} \times J \times I_{n+1} \times \cdots \times I_{N}$. Elementwise, we have $\left(\mathcal{X} \times{ }_{n} \mathbf{U}\right)_{i_{1} \cdots i_{n-1} j i_{n+1} \cdots i_{N}}=\sum_{i_{n}=1}^{I_{n}} x_{i_{1} i_{2} \cdots i_{N}} u_{j i_{n}}$

\paragraph{CP Decomposition:} The standard CP decomposition factorizes a tensor into a sum of component rank-one tensors. Given a tensor $\mathcal{X} \in \mathbb{R}^{I\times J \times K}$, we have
\[\mathcal{X} \approx \sum_{r=1}^{R} \mathbf{a}_{: r}^{(1)} \circ \cdots \circ \mathbf{a}_{: r}^{(n)}=\llbracket \mathbf{A}^{(1)}, \ldots, \mathbf{A}^{(n)} \rrbracket,\]
where $R$ denotes the rank of tensor and $\mathbf{A}^{(n)}$ is the $n$-mode factor matrix consisting of $R$ columns representing $R$ latent components which can be represented as $\mathbf{A}^{(n)}=[\mathbf{a}_{: 1}^{(n)} \cdots \mathbf{a}_{: R}^{(n)}].$
  
\paragraph{Tucker Decomposition:} The standard Tucker decomposition factorizes a tensor into a core tensor multiplied by a matrix along each mode. For a tensor $\mathcal{X} \in \mathbb{R}^{I\times J \times K}$, we can express it by
\[\mathcal{X} \approx \mathcal{G} \times_{1} \mathbf{~A} \times{ }_{2} \mathbf{~B} \times{ }_{3} \mathbf{C}=\sum_{p=1}^{P} \sum_{q=1}^{Q} \sum_{t=1}^{T} g_{p q t} \mathbf{a}_{:p} \circ \mathbf{b}_{:q} \circ \mathbf{c}_{:t}=\llbracket \mathcal{G} ; \mathbf{A}, \mathbf{B}, \mathbf{C} \rrbracket\]
where $\mathcal{G}\in \mathbb{R}^{P \times Q \times T}$ and $g_{p q t}$ indicate the core tensor and the element of $\mathcal{G}$ on coordinate $(p, q, t)$ respectively, and $\mathbf{A} \in \mathbb{R}^{I \times P}$, $\mathbf{B} \in \mathbb{R}^{J \times Q}$ and $\mathbf{C} \in \mathbb{R}^{K \times T}$ represent the factor matrices.

\subsection{Differential Privacy}
	
\begin{definition}
Let $f:\mathbb{R}^d \mapsto \mathbb{R}$ be a function:
\begin{itemize}

\item $f$ is $L$-Lipschitz if for any $u, v \in \mathbb{R}^d$, $\|f(u)-f(v)\| \leq L\|u-v\|$;
\item $f$ is $\beta$-smooth if $\|\nabla f(u)-\nabla f(v)\| \leq \beta\|u-v\|$, where $\nabla$ denotes the first order derivative.
\end{itemize}

\end{definition}

\begin{definition}
	A (randomized) algorithm $\mathcal{A}$ whose outputs lie in a domain $\mathcal{S}$ is said to be $\epsilon$-differentially private if for all subsets $S \subseteq \mathcal{S}$, for all datasets $\mathcal{D}$ and $\mathcal{D}^{\prime}$ that differ in at most one entry, it holds that: 
	\begin{equation}
	\label{eq: dp}
		\operatorname{Pr}(\mathcal{A}(\mathcal{D}) \in S) \leq e^{\epsilon} \operatorname{Pr}\left(\mathcal{A}\left(\mathcal{D}^{\prime}\right) \in S\right).
	\end{equation}
\end{definition}

\begin{definition}
	The $L_{p}$-sensitivity of a function $f: \mathcal{D}^{n} \rightarrow \mathbb{R}^{d}$ is the smallest number $\Delta_p(f)$ such that for all $\mathbf{x}, \mathbf{x}^{\prime} \in \mathcal{D}^{n}$ which differ in a single entry,
	\begin{equation}
	\label{eq: lp-sensitivity}
		\left\|f(\mathbf{x})-f\left(\mathbf{x}^{\prime}\right)\right\|_{p} \leq \Delta_p(f),
	\end{equation}
\end{definition}
where $\Delta_p(f)$ captures the magnitude by which a single individual’s data can change the function $f$ in the worst case, which provides an upper bound on how much we must perturb the input to preserve privacy.

\begin{lemma}[Laplace Mechanism~\cite{dwork2006calibrating}]
\label{le1}
	For all $f: \mathcal{D}^{n} \rightarrow \mathbb{R}^{d}$, the Laplace mechanism is defined 
\begin{equation}
\label{eq: lapalce-mechanism}
		\operatorname{San}_{f}(\mathbf{x})=f(\mathbf{x})+\left(Y_{1}, \ldots, Y_{d}\right),
	\end{equation}
	which ensures $\epsilon$-differential privacy, where the $Y_{i}$ are i.i.d. drawn from $\operatorname{Lap}(\Delta_1(f) / \epsilon)$.
\end{lemma}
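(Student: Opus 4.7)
The plan is to verify the differential privacy condition directly from the definition by comparing the output densities of $\operatorname{San}_f(\mathbf{x})$ and $\operatorname{San}_f(\mathbf{x}')$ on arbitrary neighboring datasets. Since the noise components $Y_1,\ldots,Y_d$ are independent draws from $\operatorname{Lap}(b)$ with $b=\Delta_1(f)/\epsilon$, the output density of $\operatorname{San}_f(\mathbf{x})$ evaluated at a point $\mathbf{z}\in\mathbb{R}^d$ is the product of $d$ one-dimensional Laplace densities centered at the coordinates $f(\mathbf{x})_i$. I would first write this density explicitly as $p_{\mathbf{x}}(\mathbf{z})=\prod_{i=1}^{d}\frac{1}{2b}\exp(-|z_i-f(\mathbf{x})_i|/b)$ and do the same for $\mathbf{x}'$.

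Next I would form the pointwise ratio $p_{\mathbf{x}}(\mathbf{z})/p_{\mathbf{x}'}(\mathbf{z})$; the normalizing constants cancel, leaving $\exp\bigl(\sum_{i=1}^{d}(|z_i-f(\mathbf{x}')_i|-|z_i-f(\mathbf{x})_i|)/b\bigr)$. Applying the reverse triangle inequality coordinatewise gives $|z_i-f(\mathbf{x}')_i|-|z_i-f(\mathbf{x})_i|\le |f(\mathbf{x})_i-f(\mathbf{x}')_i|$, and summing over $i$ yields an upper bound of $\|f(\mathbf{x})-f(\mathbf{x}')\|_1/b$ inside the exponential. Invoking the definition of $L_1$-sensitivity, this is at most $\Delta_1(f)/b=\epsilon$, so $p_{\mathbf{x}}(\mathbf{z})\le e^{\epsilon}p_{\mathbf{x}'}(\mathbf{z})$ for every $\mathbf{z}$.

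Finally, integrating this pointwise inequality over any measurable $S\subseteq\mathbb{R}^d$ gives $\Pr(\operatorname{San}_f(\mathbf{x})\in S)\le e^{\epsilon}\Pr(\operatorname{San}_f(\mathbf{x}')\in S)$, which is precisely the $\epsilon$-differential privacy condition~\eqref{eq: dp}. The only subtle step is the use of the reverse triangle inequality in conjunction with the coordinatewise independence of the noise; once that is handled the rest is mechanical. Since the argument is symmetric in $\mathbf{x}$ and $\mathbf{x}'$, no further case analysis is needed, and the claim follows.
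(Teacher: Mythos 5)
Your proof is correct and is the standard argument for the Laplace mechanism (it is essentially Theorem 3.6 in Dwork--Roth). The paper itself gives no proof of Lemma~\ref{le1}; it imports the result from~\cite{dwork2006calibrating} and uses it as a black box, so there is nothing in the paper to diverge from. Your density-ratio computation, the coordinatewise reverse triangle inequality, the $L_1$-sensitivity bound, and the final integration over $S$ are all sound, and the symmetry remark correctly disposes of the reverse inequality.
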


\begin{lemma}[Exponential Mechanism~\cite{dwork2006calibrating}]
\label{le2}
	Let $f$ be a deterministic query that maps a database to a vector in $\mathbb{R}^{d}$. Then publishing $f(\mathcal{D})+\boldsymbol{\kappa}$ where $\boldsymbol{\kappa}$
	is sampled from the distribution with density
	\begin{equation}
    \label{eq: exponential-mechanism}
		p(\boldsymbol{\kappa}) \propto \exp \left(-\frac{\varepsilon\|\boldsymbol{\kappa}\|}{\Delta_2(f)}\right),
	\end{equation}
	preserves $\epsilon$-differential privacy.
\end{lemma}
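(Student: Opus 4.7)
The plan is to directly verify the $\epsilon$-differential privacy condition by bounding the pointwise ratio of output densities on neighboring databases and then integrating. Writing $\mathcal{A}(\mathcal{D}) = f(\mathcal{D}) + \boldsymbol{\kappa}$, a change of variables shows that the density of $\mathcal{A}(\mathcal{D})$ at a point $\mathbf{y}$ equals the density of $\boldsymbol{\kappa}$ at $\mathbf{y} - f(\mathcal{D})$, and likewise at $\mathbf{y} - f(\mathcal{D}')$ for a neighboring dataset $\mathcal{D}'$. Because the normalizing constant of $p$ depends only on $\epsilon$ and $\Delta_2(f)$, and not on the database, it is identical for both outputs and cancels in every ratio computation — this is the clean observation I would record first.

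Next I would compute the ratio explicitly. For any $\mathbf{y} \in \mathbb{R}^d$,
\begin{equation*}
\frac{p(\mathbf{y} - f(\mathcal{D}))}{p(\mathbf{y} - f(\mathcal{D}'))} \;=\; \exp\!\left(\frac{\epsilon}{\Delta_2(f)}\bigl(\|\mathbf{y} - f(\mathcal{D}')\| - \|\mathbf{y} - f(\mathcal{D})\|\bigr)\right).
\end{equation*}
By the reverse triangle inequality, $\|\mathbf{y} - f(\mathcal{D}')\| - \|\mathbf{y} - f(\mathcal{D})\| \leq \|f(\mathcal{D}) - f(\mathcal{D}')\|$, and since $\mathcal{D}$ and $\mathcal{D}'$ differ in a single entry, the definition of $L_2$-sensitivity (Equation \eqref{eq: lp-sensitivity} with $p=2$) yields $\|f(\mathcal{D}) - f(\mathcal{D}')\| \leq \Delta_2(f)$. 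Substituting, the pointwise density ratio is at most $e^{\epsilon}$.

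To conclude, I would integrate this pointwise bound over any measurable $S \subseteq \mathbb{R}^d$, obtaining $\Pr(\mathcal{A}(\mathcal{D}) \in S) = \int_S p(\mathbf{y} - f(\mathcal{D}))\, d\mathbf{y} \leq e^{\epsilon} \int_S p(\mathbf{y} - f(\mathcal{D}'))\, d\mathbf{y} = e^{\epsilon} \Pr(\mathcal{A}(\mathcal{D}') \in S)$, which is precisely the condition in Equation \eqref{eq: dp}. Translation invariance of Lebesgue measure justifies the change of variables, and monotonicity of the integral justifies the bound on sets.

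The main obstacle — modest in this case — is the direction of the triangle inequality: one must use the reverse form to convert a difference of norms into a norm of a difference before appealing to sensitivity, and one must implicitly interpret $\|\cdot\|$ in the density $p(\boldsymbol{\kappa})$ as the Euclidean norm so that invoking $\Delta_2(f)$ is dimensionally consistent. A secondary subtlety, worth a sentence in the write-up, is that absolute continuity of $\boldsymbol{\kappa}$ on $\mathbb{R}^d$ means the set-level inequality follows directly from the pointwise density comparison without needing to handle any singular part.
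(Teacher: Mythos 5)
Your proof is correct. The paper does not actually prove Lemma~\ref{le2}; it is stated as a known result and attributed to~\cite{dwork2006calibrating}, so there is no in-paper argument to compare against. Your write-up is the standard and complete derivation: the change of variables giving the output density $p(\mathbf{y}-f(\mathcal{D}))$, cancellation of the database-independent normalizing constant, the reverse triangle inequality to reduce the difference of norms to $\|f(\mathcal{D})-f(\mathcal{D}')\| \le \Delta_2(f)$, and integration of the pointwise ratio bound over $S$. Your two flagged subtleties (using the reverse triangle inequality in the right direction, and reading $\|\cdot\|$ as the Euclidean norm so that $\Delta_2$ is the relevant sensitivity) are exactly the right ones; the only additional remark worth recording is that the density is normalizable on $\mathbb{R}^d$ precisely because $\epsilon>0$ and $\Delta_2(f)<\infty$, which guarantees the mechanism is well defined in the first place.
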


\begin{lemma}[Private Convex Permutation-based SGD~\cite{wu2017bolt}] 
\label{le5}
Consider $\tau$-passes private convex SGD~(i.e. PSGD) for $L$-Lipschitz, convex and $\beta$-smooth optimization. Suppose further that we have constant learning rate $\eta_1 = \eta_2 = \cdots \eta_{\tau} = \eta \leq \frac{2}{\beta}$. Denote $S$ and $S^\prime$ as two datasets differing on one single entry, $\tau$ as number of iteration, we have $\sup_{S \sim S^{\prime}} \sup _{r} \Delta_{T} \leq 2 \tau L \eta$, where $r$ indicates a random permutation for datasets and $T$ represents the number of iterations.
\end{lemma}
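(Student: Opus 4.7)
The plan is to couple two SGD trajectories run on neighbouring datasets $S$ and $S'$ with the \emph{same} permutation $r$, monitor the Euclidean distance between corresponding iterates, and show that this distance can grow only on those iterations at which the two trajectories actually touch the one sample that differs; on every other iteration the gradient step is a non-expansion and so cannot increase the distance.

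First, I would set up the coupling: let $w_t$ and $w_t'$ denote the iterates of PSGD on $S$ and $S'$, both starting from $w_0 = w_0'$ and both sweeping through their respective datasets in the order prescribed by $r$. Let $z_t \in S$ and $z_t' \in S'$ be the sample used at iteration $t$. Since $S$ and $S'$ differ in exactly one entry and $r$ is fixed per pass, $z_t = z_t'$ on all but $\tau$ of the $T$ total iterations (once per pass the permutation reaches the differing index).

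Second, I would invoke the standard non-expansiveness lemma for a convex $\beta$-smooth per-sample loss $g$ with stepsize $\eta \leq 2/\beta$: the gradient map $G_\eta(w) := w - \eta \nabla g(w)$ satisfies $\| G_\eta(u) - G_\eta(v) \| \leq \| u - v \|$ for all $u,v$. This follows from the Baillon-Haddad co-coercivity inequality $\langle \nabla g(u) - \nabla g(v), u - v \rangle \geq (1/\beta) \| \nabla g(u) - \nabla g(v) \|^2$: expanding $\| G_\eta(u) - G_\eta(v) \|^2 = \|u-v\|^2 - 2\eta \langle \nabla g(u) - \nabla g(v), u-v\rangle + \eta^2 \|\nabla g(u) - \nabla g(v)\|^2$ and substituting $\eta \leq 2/\beta$ yields the bound. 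I expect this to be the only genuinely analytical step in the proof.

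Third, I combine the two ingredients. On every iteration with $z_t = z_t'$, both updates apply the same gradient map, so non-expansion gives $\| w_{t+1} - w_{t+1}' \| \leq \| w_t - w_t' \|$. On each of the (at most) $\tau$ iterations with $z_t \neq z_t'$, the triangle inequality together with the $L$-Lipschitz bound on each per-sample gradient yields
\[ \| w_{t+1} - w_{t+1}' \| \;\leq\; \| w_t - w_t' \| + \eta \| \nabla f(w_t; z_t) \| + \eta \| \nabla f(w_t'; z_t') \| \;\leq\; \| w_t - w_t' \| + 2\eta L. \]
Telescoping over $t = 0,\ldots,T-1$ starting from $w_0 = w_0'$ leaves at most $\tau$ additive increments of size $2\eta L$, giving $\| w_T - w_T' \| \leq 2\tau L \eta$. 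Taking suprema over neighbouring pairs $S \sim S'$ and over permutations $r$ delivers the claim $\sup_{S \sim S'} \sup_r \Delta_T \leq 2\tau L \eta$. The main obstacle is the non-expansion lemma in step two; everything else is a counting argument over which iterations can cause the two coupled trajectories to diverge.
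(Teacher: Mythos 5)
The paper does not prove this lemma itself --- it is imported verbatim from the cited reference \cite{wu2017bolt} and used only as a black box in the proof of Theorem~\ref{the3}. Your argument is correct and is essentially the proof given in that reference: couple the two permutation-SGD trajectories under a shared permutation $r$, use the Baillon--Haddad co-coercivity inequality to show the gradient map is non-expansive for convex, $\beta$-smooth losses when $\eta \leq 2/\beta$, and observe that the iterate distance can grow by at most $2\eta L$ only on the $\tau$ iterations (one per pass) that touch the differing sample, which telescopes to $2\tau L\eta$.
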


\section{Differential Privacy Tensor Completion}\label{sec:method}

In this section, we introduce the proposed framework for privacy-preserving tensor completion. We focus on the CP and Tucker decompositions with several privacy-preserving approaches via stochastic gradient descent~(SGD) under the constraints of differential privacy. Considering the stages of tensor completion, we design input, gradient, and output perturbation approaches to maintain privacy, respectively. The overall framework is shown in Figure~\ref{fig0}.

\begin{figure}[htbp]
	\centering
	 \includegraphics[width=.9\linewidth]{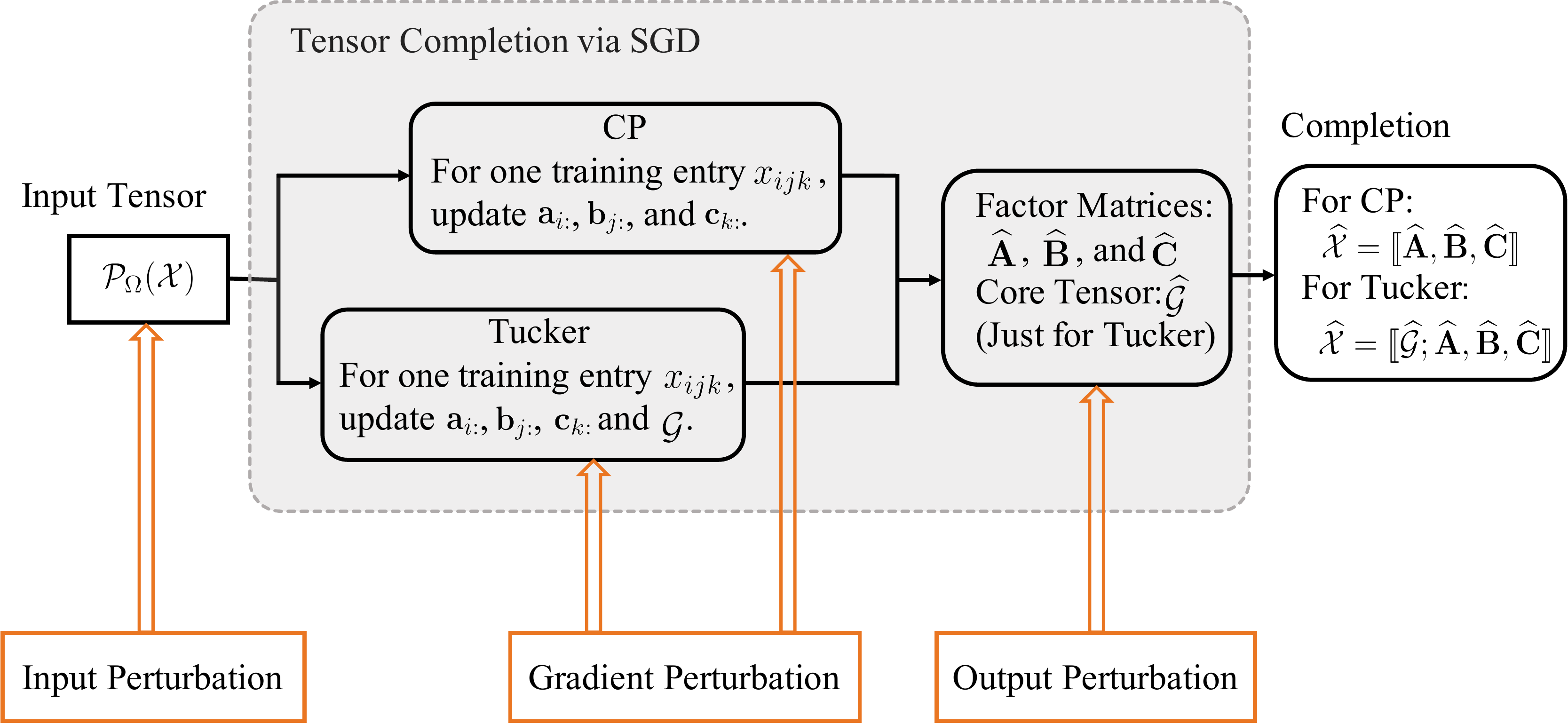}
    \caption{Various perturbation approaches within tensor completion framework.}
	\label{fig0}
\end{figure}

\subsection{Problem Formulation}
From now on, $\mathcal{X}\in \mathbb{R}^{n_{1} \times n_{2} \times n_{3}}$, which is generated by true tensor $\widetilde{\mathcal{X}}$ with unknown noise, represents the noisy incomplete tensor used to obtain estimated factor matrices and core tensor. We denote observation set by $\Omega$ which contains the indexes of available entries, and $x_{ijk}$ is observed if and only if $(i, j, k) \in \Omega$. For convenience, we introduce the sampling operator $\mathcal{P}_{\Omega}: \mathbb{R}^{n_{1} \times n_{2} \times n_{3}} \rightarrow \mathbb{R}^{n_{1} \times n_{2} \times n_{3}}$:
\begin{equation}
\label{eq: proj-operator}
\left[\mathcal{P}_{\Omega}(\mathcal{X})\right]_{i j k}=\left\{\begin{array}{ll}x_{i j k}, & (i, j, k) \in \Omega \\ 0, & \text { otherwise.}\end{array}\right.
\end{equation}
Denote three latent matrices derived from factorization by $\mathbf{A} \in \mathbb{R}^{n_1 \times d} $, $\mathbf{B} \in \mathbb{R}^{n_2 \times d}$, and $\mathbf{C} \in \mathbb{R}^{n_3 \times d}$ where $d$ indicates the rank of $\widetilde{\mathcal{X}}$, and the CP based completion problem can be formulated as:
\begin{equation}
\label{eq: cp-optimization}
  \underset{\mathbf{A}, \mathbf{B}, \mathbf{C}}{\operatorname{\min}} \quad f(\mathbf{A}, \mathbf{B}, \mathbf{C}) = \|\mathcal{P}_{\Omega} (\mathcal{X}-\llbracket \mathbf{A}, \mathbf{B}, \mathbf{C} \rrbracket)\|_F^{2} + \lambda(\|\mathbf{A}\|^2_F + \|\mathbf{B}\|^2_F + \|\mathbf{C}\|^2_F),
\end{equation}
where $\lambda$ acts as a regularization parameter to control a tunable tradeoff between fitting errors and encouraging low-rank tensor. In terms of Tucker decomposition, we denote the core tensor by $\mathcal{G}$, and set the size of $\mathcal{G}$ to $d \times d \times d$ for simplicity. In a similar regularization manner for factor matrices, we impose a $F$-norm penalty to restrict the complexity of the core tensor. Thereby, we can reformulate the problem~\eqref{eq: cp-optimization} as:
\begin{equation}
\label{eq: tucker-optimization} 
    \underset{\mathbf{A}, \mathbf{B}, \mathbf{C}, \mathcal{G}}{\operatorname{\min}}~f(\mathbf{A}, \mathbf{B}, \mathbf{C}, \mathcal{G}) = \|\mathcal{P}_{\Omega} (\mathcal{X}-\llbracket \mathcal{G} ; \mathbf{A}, \mathbf{B}, \mathbf{C} \rrbracket)\|_F^{2} + \lambda_o(\|\mathbf{A}\|^2_F + \|\mathbf{B}\|^2_F + \|\mathbf{C}\|^2_F) + \lambda_g\|\mathcal{G}\|^2_F,  
\end{equation}
where $\lambda_o$ and $\lambda_g$ indicate regularization parameters for the factor matrices and the core tensor, respectively. The presence of the core tensor constitutes the main difference between these two decomposition methods. CP decomposition performs computationally more flexible in dealing with large-scale datasets, whereas Tucker decomposition is more general and effective because its core tensor can capture complex interactions among components that are not strictly trilinear~\cite{song2019tensor}. Consequently, we can consider CP decomposition as a special case of Tucker decomposition where the cardinalities of the dimensions of latent matrices are equal and the off-diagonal elements of the core tensor are zero~\cite{schein2016bayesian}. In the following parts, we provide theoretical analysis and algorithm procedures of  the perturbation mechanisms based on Tucker decomposition.

\subsection{Private Input Perturbation}
In this approach, each entry of input tensor $\mathcal{X}$ is considered independently of the rest, and is perturbed by noise, which is bounded by $L_1$-sensitivity of $\mathcal{X}$. Suppose the entries of $\mathcal{X}$ are in the range of $[\mathcal{X}_{\max}, \mathcal{X}_{\min}]$, the $L_1$-sensitivity of the tensor is $\Delta^{(I)}_\mathcal{X} = \mathcal{X}_{\max} -\mathcal{X}_{\min}$, and noises are sampled from $\operatorname{Lap}(\Delta^{(I)}_\mathcal{X}  / \epsilon)$. The outline of this process is shown in Algorithm~\ref{Al1}.

\begin{algorithm}
\caption{Private Input Perturbation}
\label{Al1}
	\begin{algorithmic}[1]
		\REQUIRE $\mathcal{X}$: noisy incomplete tensor, $\Omega$: indexes set of observations, $d$: rank of tensor, $\lambda_o$: regularization parameter for the factor matrices, $\lambda_g$: regularization parameter for the core tensor, $\epsilon$: privacy budget

		\STATE Generate each entry of noise tensor $\mathcal{N}$ by $\operatorname{Lap}(\Delta^{(I)}_\mathcal{X}  / \epsilon)$
		\STATE Let $\mathcal{X}^{\prime} = \{x_{ijk} + n_{ijk} | (i, j,k) \in \Omega\}$

		\STATE Use $\mathcal{X}^{\prime}$ as input to solve~\eqref{eq: tucker-optimization} via SGD and obtain estimated $\widehat{\mathbf{A}}, \widehat{\mathbf{B}}, \widehat{\mathbf{C}}$ and $\widehat{\mathcal{G}}$
		\ENSURE Estimated $\widehat{\mathbf{A}} \in \mathbb{R}^{n_1 \times d} $, $\widehat{\mathbf{B}} \in \mathbb{R}^{n_2 \times d}$, $\widehat{\mathbf{C}} \in \mathbb{R}^{n_3 \times d}$ and $\widehat{\mathcal{G}} \in \mathbb{R}^{d \times d \times d} $
	\end{algorithmic}
\end{algorithm}
\begin{theorem}
\label{the1}
	Algorithm~\ref{Al1} maintains $\epsilon$-differential privacy.
\end{theorem}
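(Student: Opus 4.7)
The plan is to reduce the claim to a direct application of the Laplace mechanism (Lemma~\ref{le1}) followed by the post-processing property of differential privacy. At a high level, Algorithm~\ref{Al1} can be decomposed into two stages: a noise-injection stage that releases the sanitized tensor $\mathcal{X}^{\prime}$, and a deterministic (or at least data-independent-randomness) optimization stage that takes $\mathcal{X}^{\prime}$ as its sole input. The first stage is what interacts with the private data, so the privacy analysis should concentrate there.

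First I would identify the query being privatized. Define $f(\mathcal{X}) = \mathcal{P}_{\Omega}(\mathcal{X})$, viewed as a vector of length $|\Omega|$ whose coordinates are the observed entries $x_{ijk}$ for $(i,j,k)\in\Omega$. For two neighboring tensors $\mathcal{X}$ and $\mathcal{X}^{\prime\prime}$ differing in a single entry $(i^\ast,j^\ast,k^\ast)$, the only coordinate on which $f(\mathcal{X})$ and $f(\mathcal{X}^{\prime\prime})$ can disagree is that one, and by the assumption that all entries lie in $[\mathcal{X}_{\min},\mathcal{X}_{\max}]$, the gap is bounded by $\mathcal{X}_{\max}-\mathcal{X}_{\min}$. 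Hence $\Delta_1(f) \leq \Delta^{(I)}_{\mathcal{X}} = \mathcal{X}_{\max}-\mathcal{X}_{\min}$.

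Next I would invoke Lemma~\ref{le1}: since Step~1 of Algorithm~\ref{Al1} draws i.i.d.\ noise $n_{ijk}\sim \operatorname{Lap}(\Delta^{(I)}_{\mathcal{X}}/\epsilon)$ and Step~2 forms $\mathcal{X}^{\prime} = f(\mathcal{X}) + (n_{ijk})_{(i,j,k)\in\Omega}$, the Laplace mechanism guarantees that the release of $\mathcal{X}^{\prime}$ is $\epsilon$-differentially private. Finally I would appeal to the post-processing property of differential privacy: the SGD solver in Step~3 is a (possibly randomized) map that takes only $\mathcal{X}^{\prime}$ and public parameters $(\Omega,d,\lambda_o,\lambda_g,\epsilon)$ as input and outputs $(\widehat{\mathbf{A}},\widehat{\mathbf{B}},\widehat{\mathbf{C}},\widehat{\mathcal{G}})$. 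Since any function of an $\epsilon$-DP output remains $\epsilon$-DP, the entire algorithm is $\epsilon$-differentially private.

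There is no substantive obstacle here; the only mild care-point is making explicit that $\Omega$ itself is treated as public (so neighboring tensors share the same observation pattern), which is implicit in the problem setup where privacy is defined with respect to the values $x_{ijk}$ at observed indices. With that convention fixed, the sensitivity bound and the post-processing step close the argument.
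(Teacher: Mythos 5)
Your proposal is correct and follows essentially the same route as the paper: bound the $L_1$-sensitivity of the observed entries by $\mathcal{X}_{\max}-\mathcal{X}_{\min}$ and invoke the Laplace mechanism (Lemma~\ref{le1}). The only difference is that you make explicit the post-processing step for the downstream SGD solver, which the paper's one-line proof leaves implicit; this is a welcome clarification rather than a departure.
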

\begin{proof}[Proof of Theorem~\ref{the1}]
	The $L_1$-sensitivity of the input tensor is $\Delta^{(I)}_\mathcal{X}  = \mathcal{X}_{\max} -\mathcal{X}_{\min}$. According to Lemma~\ref{le1}, this algorithm maintains $\epsilon$-differential privacy.\end{proof}
Essentially, exerting private perturbation on $\mathcal{X}$ is equivalent to adding noise following a specific distribution to it. The magnitude of noises is determined by the $L_1$-sensitivity of $\mathcal{X}$ and privacy budget $\epsilon$. Optionally, to limit the influence of excessive noise, we can clamp perturbed $\mathcal{X}$ to a fixed range before training. Moreover, the input perturbation can protect the privacy concerning the existence of observations in scenarios where missing entries are assigned to zero by default. 

\subsection{Private Gradient Perturbation}

The gradient perturbation maintains privacy by introducing noise in the SGD step~\cite{bassily2014private}. In our gradient perturbation, we add noises to the computed gradients, and then utilize noisy gradients to update the corresponding rows of the factor matrices and the core tensor. For the sake of simplicity, we spend the all privacy budget on one single factor matrix. Here, we take $\mathbf{C}$ as an example. In each iteration, the gradient of $\mathbf{C}$ will be added by noise sampled from one exponential distribution. Before that, to be compatible with our theoretical assumption in Theorem~\ref{the2}, we clip the gradient $l_2$-norms of $\mathbf{C}$ to a constant $m$ using $\mathbf{v} \leftarrow \mathbf{v} / \max \left(1,\|\mathbf{v}\|_{2} / m\right)$~\cite{abadi2016deep, wang2019dp}. The global sensitivity here is denoted by $\Delta^{(G)}_{\mathcal{X}}$. Algorithm~\ref{Al2} summarizes this process. 
\begin{theorem}
\label{the2}
	Suppose that function $f$ with regard to $\mathbf{C}$ in~\eqref{eq: tucker-optimization} is $L$-Lipschitz, Algorithm~\ref{Al2} maintains $\epsilon$-differential privacy.
\end{theorem}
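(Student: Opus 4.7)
The plan is to combine the sensitivity bound for permutation-based SGD (Lemma~\ref{le5}) with the exponential mechanism (Lemma~\ref{le2}) applied to the released factor matrix $\widehat{\mathbf{C}}$.

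First, I would check that Lemma~\ref{le5} applies to the subproblem in $\mathbf{C}$. With $\mathbf{A}$, $\mathbf{B}$, $\mathcal{G}$ held fixed at any iterate, the objective in~\eqref{eq: tucker-optimization} restricted to $\mathbf{C}$ is a quadratic in $\mathbf{C}$, hence convex and $\beta$-smooth (with $\beta$ depending on the current $\mathbf{A}, \mathbf{B}, \mathcal{G}$), and the $L$-Lipschitz hypothesis on $f$ with respect to $\mathbf{C}$ is given outright. The clipping rule $\mathbf{v} \leftarrow \mathbf{v}/\max(1,\|\mathbf{v}\|_2/m)$ enforces that every stochastic gradient has $\ell_2$-norm at most $m$, so Lemma~\ref{le5} applies with effective Lipschitz constant $\min(L,m)$ and step size $\eta \leq 2/\beta$.

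Second, for two observation sets $\Omega$ and $\Omega'$ that differ in a single index, I would invoke Lemma~\ref{le5} on the $\tau$-pass SGD trajectory for $\mathbf{C}$ to obtain
\[
\Delta^{(G)}_{\mathcal{X}} \;=\; \sup_{\Omega \sim \Omega'} \sup_{r} \bigl\|\mathbf{C}_T(\Omega) - \mathbf{C}_T(\Omega')\bigr\|_2 \;\leq\; 2\tau L \eta,
\]
where $T$ denotes the total number of SGD iterations and $r$ is the random permutation used for dataset ordering. This is precisely the global $L_2$-sensitivity $\Delta^{(G)}_{\mathcal{X}}$ referenced in the algorithm description. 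Since the noise $\boldsymbol{\kappa}$ added to $\mathbf{C}$ is drawn from the density $p(\boldsymbol{\kappa}) \propto \exp\bigl(-\epsilon \|\boldsymbol{\kappa}\|/\Delta^{(G)}_{\mathcal{X}}\bigr)$, Lemma~\ref{le2} immediately gives that the released $\widehat{\mathbf{C}}$ is $\epsilon$-differentially private. Because $\widehat{\mathbf{A}}$, $\widehat{\mathbf{B}}$, $\widehat{\mathcal{G}}$ are obtained from $\widehat{\mathbf{C}}$ via computations that factor through the privatized iterate, the post-processing property of differential privacy transfers the $\epsilon$-DP guarantee to the entire output tuple.

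The main obstacle I anticipate is making the invocation of Lemma~\ref{le5} rigorous in the presence of the joint Tucker structure: the full objective is manifestly non-convex in $(\mathbf{A},\mathbf{B},\mathbf{C},\mathcal{G})$ simultaneously, and the coupling argument of Wu \emph{et al.}\ relies on convexity along the update coordinate. One must argue that only the per-iterate conditional problem in $\mathbf{C}$ matters for the coupling, and that the trajectories on $\Omega$ and $\Omega'$ remain coupled even though the other factors $(\mathbf{A},\mathbf{B},\mathcal{G})$ may drift apart across the two runs. A related subtlety is reconciling per-step gradient noise (as stated in the algorithmic description) with a single output-level perturbation of scale $\Delta^{(G)}_{\mathcal{X}}/\epsilon$: the cleanest route is to interpret the algorithm through the Bolt-on framework of~\cite{wu2017bolt}, where the full privacy budget is ``released'' once at the terminal iterate, so that Lemma~\ref{le2} can be applied without invoking composition across iterations.
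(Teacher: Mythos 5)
Your proposal proves a different theorem than the one asked. The route you describe --- bound the $L_2$-sensitivity of the terminal SGD iterate by $2\tau L\eta$ via Lemma~\ref{le5}, then add one exponential-mechanism noise to the released $\widehat{\mathbf{C}}$ via Lemma~\ref{le2} --- is exactly the paper's proof of Theorem~\ref{the3} (output perturbation, Algorithm~\ref{Al3}). Algorithm~\ref{Al2} is not that algorithm: it injects a fresh noise vector into the gradient $\nabla_{\mathbf{c}_{k:}}f$ at \emph{every} update step, with scale calibrated to $\Delta^{(G)}_{\mathcal{X}}$ (which the paper sets to $2L$, or $2m$ after clipping), not to $2\tau L\eta$. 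The place where your argument breaks is precisely the ``reconciliation'' you flag at the end: you cannot analyze a per-step-noise algorithm by declaring that the budget is ``released once at the terminal iterate.'' If each of the $\tau|\Omega|$ steps draws independent noise calibrated to the full budget $\epsilon$, the obvious composition bound is $\tau|\Omega|\epsilon$, and something substantive must be said to avoid it; appealing to the Bolt-on framework does not supply that argument, since Bolt-on is an output-perturbation scheme. The convexity obstruction you raise for Lemma~\ref{le5} is real but moot, because the paper's proof never invokes that lemma here.

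What the paper actually does is a density-ratio argument on the noise itself, which is the ``novel proof to separate the privacy budget from the iteration number'' it advertises. Fix the output $\mathbf{C}^*$ and two inputs $\mathcal{X},\mathcal{X}'$ differing in one entry. For both runs to land on $\mathbf{C}^*$, the noisy gradients must coincide, so the noise vectors satisfy $\mathbf{n}_{k:}-\mathbf{n}'_{k:}=\nabla_{\mathbf{c}_{k:}}f(\mathbf{c}^*_{k:}\mid\mathcal{X})-\nabla_{\mathbf{c}_{k:}}f(\mathbf{c}^*_{k:}\mid\mathcal{X}')$, whose norm is at most $2L=:\Delta^{(G)}_{\mathcal{X}}$ by the Lipschitz hypothesis. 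The likelihood ratio $\prod_k p(\mathbf{n}_{k:})/p(\mathbf{n}'_{k:})$ then collapses to the single affected row and is bounded by $\exp\bigl(\epsilon\|\mathbf{n}_{k:}-\mathbf{n}'_{k:}\|/\Delta^{(G)}_{\mathcal{X}}\bigr)\le\exp(\epsilon)$. (The paper's own write-up is admittedly terse --- it evaluates gradients at a common minimizer and assumes a unique noise-to-output map --- but that is the mechanism the theorem rests on, and your proposal does not contain it.)
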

\begin{algorithm}
	\caption{Private Gradient Perturbation}
	\label{Al2}
	\begin{algorithmic}[1]
	\REQUIRE $\mathcal{X}$: noisy incomplete tensor, $\Omega$: indexes set of observations, $d$: rank of tensor, $\lambda_o$: regularization parameter for the factor matrices, $\lambda_g$: regularization parameter for the core tensor, $n$: number of iterations, $\epsilon$: privacy budget, $\eta$: learning rate, $m$: clipping constant

		\STATE Initialize random factor matrices $\mathbf{A}, \mathbf{B}, \mathbf{C}, \mathcal{G}$
		\FOR{$n$ iterations}
			\FOR{$x_{ijk} \in \mathcal{X}$}
				\STATE $\mathbf{a}_{i :} \leftarrow \mathbf{a}_{i :} - \eta \nabla_{\mathbf{a}_{i:}} f$
				\STATE $\mathbf{b}_{j:} \leftarrow \mathbf{b}_{j :} - \eta \nabla_{\mathbf{b}_{j:}} f$
                \STATE $\nabla_{\mathbf{c}_{k:}}f \leftarrow \nabla_{\mathbf{c}_{k:}}f / \max \left(1,\|\nabla_{\mathbf{c}_{k:}}f\|_{2} / m\right)$ 
                \STATE Sample noise vector $\mathbf{n}_{i:}$ satisfying $p(\mathbf{n}_{i:}) \propto e^{-\frac{\varepsilon\|\mathbf{n}_{i:}\|}{\Delta^{(G)}_{\mathcal{X}}}}$
                
				\STATE $\mathbf{c}_{k:} \leftarrow \mathbf{c}_{k:}  - \eta (\nabla_{\mathbf{c}_{k:}} f + \mathbf{n}_{i:})$
				\STATE $\mathcal{G} \leftarrow \mathcal{G}-\eta \nabla_{\mathcal{G}} f$
			\ENDFOR
		\ENDFOR
		\ENSURE Estimated $\widehat{\mathbf{A}} \in \mathbb{R}^{n_1 \times d} $, $\widehat{\mathbf{B}} \in \mathbb{R}^{n_2 \times d}$, $\widehat{\mathbf{C}} \in \mathbb{R}^{n_3 \times d}$ and $\widehat{\mathcal{G}} \in \mathbb{R}^{d \times d \times d} $
	\end{algorithmic}
\end{algorithm}
\begin{proof}
	Let $\mathcal{X}$ and $\mathcal{X}^{\prime}$ be two tensors differing at only element $x_{pqr}$ and $x_{pqr}^{\prime}$. Let $\mathbf{N} = \{n_{ij}\}$ and $\mathbf{N}^\prime = \{n^\prime_{ij}\}$ be the noise matrices when training with $\mathcal{X}$ and $\mathcal{X}^{\prime}$ respectively. According to the optimization formulation~\eqref{eq: tucker-optimization}, it is obviously differentiable anywhere, which ensures the unique mapping from input to output. Denote $\mathbf{C}^*$ as the derived factor matrix minimizes both the optimization problems, and we have $\forall k \in\{1,2, \cdots, n_3\}$, $\nabla_{\mathbf{c}_{k:}} f(\mathbf{c}_{k:}^*|\mathcal{X})=\nabla_{\mathbf{c}_{k:}} f(\mathbf{c}_{k:}^*|\mathcal{X}^{\prime})$. Thereby, given $x_{ijk}$ and $x_{ijk}^{\prime}$, we have:
\begin{equation}
\label{eq: the2-eq1}
\nabla_{\mathbf{c}_{k:}} f(\mathbf{c}_{k:}^*|\mathcal{X}) + \mathbf{n}_{k:}  
    =
\nabla_{\mathbf{c}_{k:}} f(\mathbf{c}_{k:}^*|\mathcal{X}^{\prime}) + \mathbf{n}^\prime_{k:}.
\end{equation}
Then we can derive that:
\begin{equation}
\label{eq: the2-eq2}
\begin{split}
  \mathbf{n}_{k:} - \mathbf{n}^\prime_{k:} 
  &= \nabla_{\mathbf{c}_{k:}} f(\mathbf{c}_{k:}^*|\mathcal{X}) - \nabla_{\mathbf{c}_{k:}} f(\mathbf{c}_{k:}^*|\mathcal{X}^{\prime}), \\
  \|\mathbf{n}_{k:} - \mathbf{n}^\prime_{k:}\|
  = \;&\|\nabla_{\mathbf{c}_{k:}} f(\mathbf{c}_{k:}^*|\mathcal{X}) - \nabla_{\mathbf{c}_{k:}} f(\mathbf{c}_{k:}^*|\mathcal{X}^{\prime})\| \leq 2L.
  \end{split}
\end{equation}
Denote $\Delta^{(G)}_\mathcal{X} = 2L$. For any pair of $x_{pqg}$ and ${x^{\prime}}_{pqg}$, we have:
\begin{equation}
\label{eq: the2-eq3}
\begin{split}
  \frac{\text{Pr}\left[\mathbf{C}= \mathbf{C}^* \mid \mathcal{X}\right]}{\text{Pr}\left[\mathbf{C}= \mathbf{C}^* \mid \mathcal{X}^{\prime}\right]} 
  & = \prod_{k=1}^{n_3} \frac{ p(\mathbf{n}_{k:})}{p(\mathbf{n}_{k:}^\prime)}  = \exp \left\{-\frac{\epsilon\left(\sum_{k=1}^{n_3} \|\mathbf{n}_{k:}\| -\sum_{k=1}^{n_3}\|\mathbf{n}_{k:}^\prime\|\right)} {\Delta^{(G)}_\mathcal{X}} \right\}\\
  & = \exp \left\{  - \frac{\epsilon( \|\mathbf{n}_{k:}\| -\|\mathbf{n}_{k:}^\prime\|)}{\Delta^{(G)}_\mathcal{X}}\right\}   \leq \exp \left\{  \frac{ \epsilon (\|\mathbf{n}_{k:} - \mathbf{n}_{k:}^\prime\|)}{\Delta^{(G)}_\mathcal{X}}\right\}  \\
& \leq \exp(\epsilon).
\end{split}
\end{equation}
Hence, the algorithm maintains $\epsilon$-differential privacy for the whole process\end{proof}
In contrast to the previous gradient perturbation approaches~\cite{bassily2014private}, we propose a novel proof to separate the privacy budget from the iteration number. In this way, we can avoid generating excessive noise under a too-small privacy budget in each iteration. Besides, we set clipping constant to bound the gradient $l_2$-norm to limit fluctuation of gradient and magnitude of noise, which makes the updating process more robust regarding privacy budget.

\subsection{Private Output Perturbation}

The output perturbation achieves privacy protections by adding noise to the final model~\cite{dwork2006calibrating}. We can divide the privacy budget among all outputs in our approach, including the factor matrices and the core tensor. For simplicity, we only consider adding noise to estimated $\widehat{\mathbf{C}}$. After the updating process of SGD, noise vectors sampled by one exponential mechanism will be added to each row of $\widehat{\mathbf{C}}$. Define $\Delta^{(O)}_{\mathcal{X}} = 2\tau L\eta$ where $\tau$, $L$, and $\eta$ indicate number of iterations, Lipschitz constant, and learning rate, respectively. The summary of this process is shown in Algorithm~\ref{Al3}.
\begin{algorithm}
	\caption{Private Output Perturbation}
	\label{Al3}
	\begin{algorithmic}[1]
		\REQUIRE $\mathcal{X}$: noisy incomplete tensor, $\Omega$: indexes set of observations, $d$: rank of tensor, $\epsilon$: privacy budget
		\STATE Solve~\eqref{eq: tucker-optimization} via SGD and obtain estimated $\widehat{\mathbf{A}}, \widehat{\mathbf{B}},  \widehat{\mathbf{C}}$ and $\widehat{\mathcal{G}}$
		\STATE Sample noise matrix $\mathbf{N}$, all rows of which are sampled from $\exp\left\{-\frac{\epsilon\|\mathbf{n}_{i:}\|}{\Delta^{(O)}_{\mathcal{X}}}\right\}$
		\STATE $\widehat{\mathbf{C}} \leftarrow \widehat{\mathbf{C}} + \mathbf{N}$
		     \ENSURE Estimated $\widehat{\mathbf{A}} \in \mathbb{R}^{n_1 \times d} $, $\widehat{\mathbf{B}} \in \mathbb{R}^{n_2 \times d}$, $\widehat{\mathbf{C}} \in \mathbb{R}^{n_3 \times d}$ and $\widehat{\mathcal{G}} \in \mathbb{R}^{d \times d \times d} $
	\end{algorithmic}
\end{algorithm}
\begin{theorem}
\label{the3}
	Algorithm~\ref{Al3} maintains $\epsilon$-differential privacy.
\end{theorem}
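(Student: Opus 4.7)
The plan is to combine two results already in the excerpt: Lemma~\ref{le5}, which bounds the $L_2$-sensitivity of the factor matrix produced by permutation-based SGD, and Lemma~\ref{le2}, the exponential mechanism, which converts a sensitivity bound into an $\epsilon$-DP guarantee via noise with density $p(\boldsymbol{\kappa})\propto\exp(-\epsilon\|\boldsymbol{\kappa}\|/\Delta_2(f))$.

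First I would view the SGD subroutine in Line~1 of Algorithm~\ref{Al3} as a deterministic function $f$ that maps the input tensor $\mathcal{X}$ to the learned matrix $\widehat{\mathbf{C}}$ (with the factors $\widehat{\mathbf{A}},\widehat{\mathbf{B}},\widehat{\mathcal{G}}$ treated as fixed auxiliary output, since only $\widehat{\mathbf{C}}$ is perturbed). Invoking Lemma~\ref{le5} under the assumptions that the objective in \eqref{eq: tucker-optimization} is $L$-Lipschitz, convex, and $\beta$-smooth with respect to $\mathbf{C}$, with constant learning rate $\eta\leq 2/\beta$ over $\tau$ passes, gives
\[
\sup_{\mathcal{X}\sim\mathcal{X}'}\sup_{r}\|\widehat{\mathbf{C}}(\mathcal{X})-\widehat{\mathbf{C}}(\mathcal{X}')\|\;\leq\;2\tau L\eta\;=\;\Delta^{(O)}_{\mathcal{X}},
\]
which is exactly the $L_2$-sensitivity required to calibrate the noise.

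Next, for any pair of neighbouring tensors $\mathcal{X}\sim\mathcal{X}'$, I would write the ratio of output densities row by row. Since the noise vectors $\mathbf{n}_{i:}$ are drawn independently across rows from $p(\mathbf{n}_{i:})\propto\exp(-\epsilon\|\mathbf{n}_{i:}\|/\Delta^{(O)}_{\mathcal{X}})$, the usual exponential-mechanism calculation, combined with the reverse triangle inequality $\bigl|\|\mathbf{n}_{i:}\|-\|\mathbf{n}'_{i:}\|\bigr|\leq\|\mathbf{n}_{i:}-\mathbf{n}'_{i:}\|$ and the sensitivity bound above, yields
\[
\frac{\Pr[\widehat{\mathbf{C}}+\mathbf{N}=\mathbf{C}^{*}\mid\mathcal{X}]}{\Pr[\widehat{\mathbf{C}}+\mathbf{N}=\mathbf{C}^{*}\mid\mathcal{X}']}\;\leq\;\exp\!\left(\frac{\epsilon\,\|\widehat{\mathbf{C}}(\mathcal{X})-\widehat{\mathbf{C}}(\mathcal{X}')\|}{\Delta^{(O)}_{\mathcal{X}}}\right)\;\leq\;e^{\epsilon}.
\]
This mirrors the chain of inequalities used in the proof of Theorem~\ref{the2}, but applied once to the final output rather than to per-iteration gradients.

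The main obstacle, and the step I would be most careful about, is making the reduction to a single application of the exponential mechanism rigorous: one must argue that the unperturbed output $\widehat{\mathbf{C}}$ of SGD is a deterministic (or at least fixed-randomness, e.g., fixed permutation $r$) function of $\mathcal{X}$, so that the noise is calibrated to a genuine $L_2$-sensitivity rather than to a random quantity. Conditioning on the permutation/initialization and then invoking Lemma~\ref{le5}'s supremum over $r$ handles this cleanly. Once this is in place, the remaining steps are standard and the $\epsilon$-DP conclusion for Algorithm~\ref{Al3} follows immediately.
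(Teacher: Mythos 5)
Your proposal follows essentially the same route as the paper's proof: invoke Lemma~\ref{le5} to bound the $L_2$-sensitivity of the SGD output by $2\tau L\eta = \Delta^{(O)}_{\mathcal{X}}$, then apply the exponential mechanism of Lemma~\ref{le2} to conclude $\epsilon$-differential privacy. Your version is in fact more careful than the paper's two-line argument, since you make the density-ratio calculation and the determinism of the SGD map explicit.
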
 
\begin{proof}[Proof of Theorem~\ref{the3}]
According to Lemma~\ref{le5}, $L_2$-sensitivity is bounded by $2 \tau L\eta$ denoted as $\Delta^{(O)}_{\mathcal{X}}$. By adding noises from~\ref{le2}, it directly yields $\epsilon$-differential privacy for this algorithm.\end{proof}
The advantages of the output perturbation lie in its flexible allocation of privacy budget and ease of implementation. On the other hand, the introduced noise directly impacts completion results, which makes completion performance susceptible to the noise.

\section{Evaluation}
\label{sec:eva}

In this section, we evaluate our proposal on synthetic and real-world datasets. For each experiment scenario, we randomly split observations into $80\%$ and $20\%$ as train/test sets, and perform the three perturbation approaches on two decomposition methods under the appropriate parameters. For comparison, we use the vanilla decomposition methods without perturbation as baselines. We measure the performance of tensor completion using the Root Mean Square Error~(RMSE) metric, computed by $\operatorname{RMSE} = \sqrt{\sum_{\Omega} (\tilde{x}_{ijk} - \hat{x}_{ijk})^2 / |\Omega|}$, where $\Omega$ represents indexes of test set. Owing to the uncertainty of introducing noise, the reported RMSE is averaged across multiple runs.

\subsection{Simulation Study}

In this part, we set the size and rank of $\mathcal{X}$ to $20 \times 20 \times 20$ and $3$, respectively. We use different ways to generate target tensor for CP and Tucker decompositions. Motivated by~\cite{acar2011scalable}, we construct $\mathcal{X}$ for CP decomposition by $\mathcal{X} = \llbracket \widetilde{\mathbf{A}}, \widetilde{\mathbf{B}}, \widetilde{\mathbf{C}} \rrbracket + \mathcal{N}$ where $\widetilde{\mathbf{A}}\in \mathbb{R}^{20\times 3}, \widetilde{\mathbf{B}}\in \mathbb{R}^{20\times 3}$ and $\widetilde{\mathbf{C}}\in \mathbb{R}^{20\times 3}$ are from standard normal distribution, and $\mathcal{N}$ represents a mean zero Gaussian noise tensor satisfying that signal-to-noise~(SNR)~is one. In addition, all columns of the factor matrices are normalized to unit length. For Tucker decomposition, we generate the factor matrices by a similar manner and make their columns orthogonal to each other. We draw the entries of the core tensor $\widetilde{{\mathcal{G}}} \in \mathbb{R}^{3 \times 3 \times 3}$ from standard normal distribution~\cite{song2019tensor} and construct $\mathcal{X}$ via $\mathcal{X} = \llbracket \widetilde{\mathcal{G}} ; \widetilde{\mathbf{A}}, \widetilde{\mathbf{B}}, \widetilde{\mathbf{C}} \rrbracket + \mathcal{N}$ where $\mathcal{N}$ is same as the generation in CP decomposition. For the convenience of performance visualization, we transform $\widetilde{\mathcal{X}}$ by min-max scaling before introducing noise tensor.  With regard to parameters setting, we set regularization term $\lambda$ in~\eqref{eq: cp-optimization} to $0.01$, and learning rate to $0.005$ in CP decomposition. For Tucker decomposition, we take regularization terms $\lambda_o$ and $\lambda_g$ in~\eqref{eq: tucker-optimization} by $0.001$ and $0.0001$, respectively, and learning rate by $0.005$. For both decomposition methods, we the set maximum number of iterations to $100$. In the following experiments, we consider $\widetilde{\mathcal{X}}$ with missing ratio of $50\%$ as the benchmark case. 

\begin{figure}[t]
	\centering
	 \includegraphics[width=0.8\linewidth]{./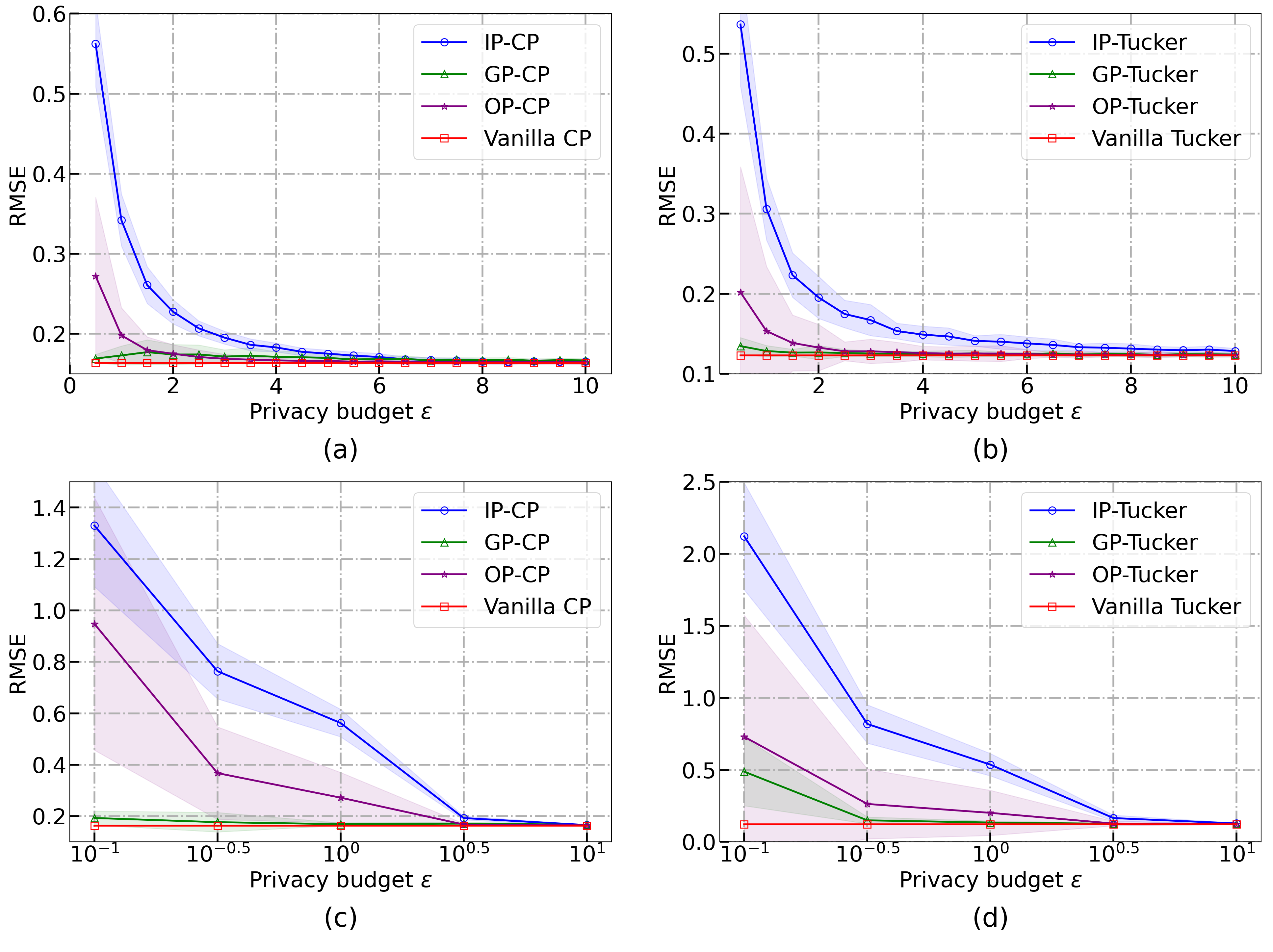}
    \caption{Performance comparison of CP and Tucker decompositions. The left and right columns present the performance of CP decomposition and Tucker decomposition, respectively. The colored area around each RMSE curve reflects the standard deviation of RMSE, averaged over $50$ realizations. We can view this area as a stability indicator for each perturbation approach.}
	\label{fig1}
\end{figure}

Figure~\ref{fig1} shows the performance comparisons among several perturbation approaches under the same decomposition method. As expected, decomposition methods with perturbation approaches cannot outperform the baselines, and their RMSE increase with the privacy parameter $\epsilon$ shrinking. This can be explained by that keeping a higher level of privacy means introducing larger noises, which leads to lower accuracy. Overall, there is no significant difference in the trade-off of privacy-accuracy between CP decomposition and Tucker decomposition. Specifically, the figure illustrates that the performance of gradient perturbation~(GP) is followed by output perturbation~(OP) and input perturbation~(IP) in terms of accuracy and stability. In Figure~\ref{fig1}~(a)~and~(b), we observe that the curves of GP are very close to that of the baselines, which is caused by the experimental setting where we set $\Delta^{(G)}_{\mathcal{X}} = 2m$, and $m$ here indicates the clipping constant. A small clipping constant means a small $\Delta^{(G)}_{\mathcal{X}}$, which can offset the impact of smaller $\epsilon$. We observe the trade-off of gradient perturbation in Figure~\ref{fig1}~(c)~and~(d), where privacy parameters are presented by exponential magnitude.
\begin{figure}[htbp]
	\centering
	 \includegraphics[width=0.95\linewidth]{./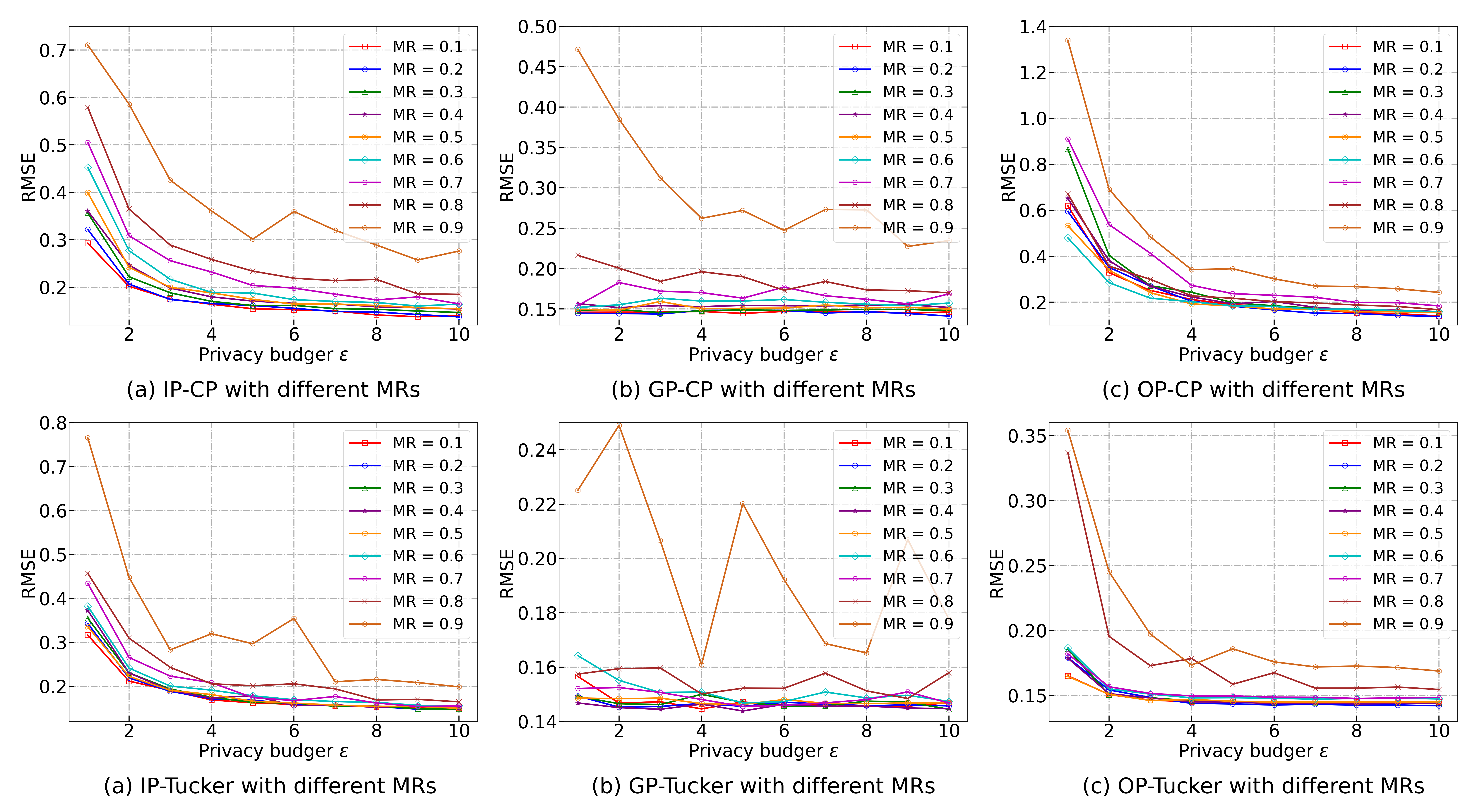}
    \caption{Performance comparison with different MR for the same perturbation approach. The first- and second-row are pertaining to the perturbation methods in CP decomposition and Tucker decomposition, respectively, where the displayed results are averaged over $10$ runs.}
	\label{fig2}
\end{figure}

In addition, from the perspective of stability, GP can maintain the lowest fluctuation as long as $\epsilon$ is set not too small. This is consistent with our analysis in Algorithm~\ref{Al2}. For two other perturbation methods, the fluctuation of IP keeps dropping mildly with $\epsilon$, whereas OP exhibits a sharp decrease. One reasonable explanation is that the iteration number of SGD has much difference in mitigating noise impact. Obviously, IP has the most iteration numbers since the noise is introduced prior to the iteration process. In contrast to IP, OP adds noise to the estimated output after completing the whole iteration process, making completion performance susceptible to  noise.

Figure~\ref{fig2} reveals that the comparison results of several perturbation approaches under a series of missing scenarios. For each perturbation approach, we evaluate the completion performance with missing ratio~(MR)~ranging from $0.1$ to $0.9$. Overall, the figure demonstrates an increased tendency of RMSE with the higher MR. In particular, a significant decrease in performance only occurs when the missing ratio is taken by $0.9$, which implies that our proposed privacy-preserving approaches can maintain the high accuracy until the sparsity of the dataset is over a certain high threshold.

\subsection{Empirical Study on ML-$100$K}

In this part, we analyze the performance of the proposed methods on MovieLens~$100$K~(ML-$100$K)~\cite{harper2015movielens}~datasets, which consists of $943$ users and $1682$ movies and has the density of $6.30\%$. We divide the timestamps into $212$ values by day and unfold the original rating matrix to tensor by expanding timestamps as the third dimension. We utilize the canonical partition~(ua.base/ua.test and ub.base/ub.test) to train and evaluate our proposed perturbation methods. To avoid the bias issue of data, we employ the bi-scaling procedure~\cite{mazumder2010spectral}, which standardizes a matrix to have rows and columns of means zero and variances one, to matrices separated from tensor by timestamp before applying any perturbation methods. In terms of parameters setting, we set $\lambda$ to $0.01$ in~\eqref{eq: cp-optimization} and the learning rate to $0.005$ in CP decomposition. Also, we take $\lambda_o = 0.01$, $\lambda_g = 0.001$ in~\eqref{eq: tucker-optimization} and the learning rate to $0.003$ in Tucker decomposition. For both methods, we set maximum number of iterations to $100$.

\begin{figure}[htbp]
	\centering
	 \includegraphics[width=.8\linewidth]{./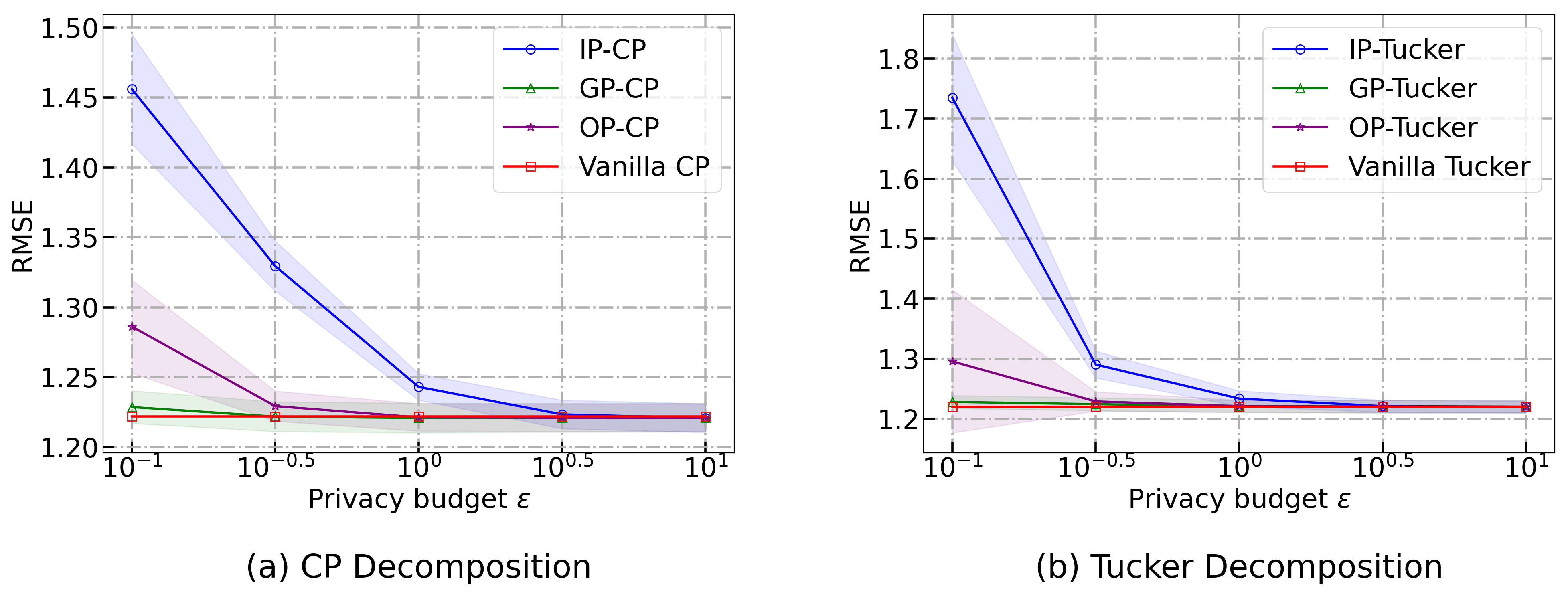}
    \caption{Comparison results on ML-$100$K under CP and Tucker decompositions. The reported results are the average of two splited datasets through 10 runs.}
	\label{fig3}
\end{figure}

Figure~\ref{fig3} shows the performance of perturbation approaches on ML-$100$K with the same decomposition method. Overall, we observe that three approaches have comparable performance to that on synthetic datasets, which validates the effectiveness of our proposal in practical scenarios.

\section{Conclusion and Future Work}\label{sec:con}
In this paper, we have established a unified privacy-preserving framework for CP and Tucker decompositions. This framework contains three perturbation approaches to tackle the privacy issue in tensor completion via differential privacy. For each approach, we have provided the algorithm procedures and theoretical analyses. Through experiments on synthetic and real-world datasets, we have verified the effectiveness of the proposed framework. Particularly worth mentioning is that the gradient perturbation approach can achieve a stable and remarkable accuracy with small privacy budgets, indicating great potential for practical applications.

There are many intriguing future directions to pursue. Firstly, we can adapt our proposal to improve variants of tensor completion, especially for methods based on CP or Tucker decompositions. Secondely, we can extend the framework to other scenarios where servers' responsible for data collection are untrusted. Thirdly, we can develop more sophisticated methods to incorporate the side information of target tensor in our proposed framework to obtain further performance enhancement.

\bibliographystyle{plain}
\bibliography{differential_privacy, tensor_completion, matrix_completion}

\end{document}